\theoremstyle{plain}
\newtheorem{theorem}{Theorem}[section]
\newtheorem{proposition}[theorem]{Proposition}
\theoremstyle{definition}
\theoremstyle{remark}
\newcolumntype{C}[1]{>{\centering\arraybackslash}p{#1}}
\def\eqref#1{equation~\ref{#1}}
\def\1{\bm{1}}
\DeclareMathAlphabet{\mathsfit}{\encodingdefault}{\sfdefault}{m}{sl}
\SetMathAlphabet{\mathsfit}{bold}{\encodingdefault}{\sfdefault}{bx}{n}
\title{TreeGRPO: Tree-Advantage GRPO for Online RL Post-Training of Diffusion Models}
\author{Zheng Ding \thanks{Equal Contribution} \\
UC San Diego\\
\texttt{zhding@ucsd.edu} \\
\And
Weirui Ye \footnotemark[1] \\
MIT \\
\texttt{ywr@csail.mit.edu} \\
}
\begin{document}
\maketitle

\begin{abstract}
Reinforcement learning (RL) post-training is crucial for aligning generative models with human preferences, but its prohibitive computational cost remains a major barrier to widespread adoption. We introduce \textbf{TreeGRPO}, a novel RL framework that dramatically improves training efficiency by recasting the denoising process as a search tree. From shared initial noise samples, TreeGRPO strategically branches to generate multiple candidate trajectories while efficiently reusing their common prefixes. This tree-structured approach delivers three key advantages: (1) \emph{High sample efficiency}, achieving better performance under same training samples (2) \emph{Fine-grained credit assignment} via reward backpropagation that computes step-specific advantages, overcoming the uniform credit assignment limitation of trajectory-based methods, and (3) \emph{Amortized computation} where multi-child branching enables multiple policy updates per forward pass. Extensive experiments on both diffusion and flow-based models demonstrate that TreeGRPO achieves \textbf{2.4$\times$ faster training} while establishing a superior Pareto frontier in the efficiency-reward trade-off space. Our method consistently outperforms GRPO baselines across multiple benchmarks and reward models, providing a scalable and effective pathway for RL-based visual generative model alignment. The project website is available at \url{treegrpo.github.io}. 
\end{abstract}

\begin{figure}[h]
    \centering
    \includegraphics[width=0.9\linewidth]{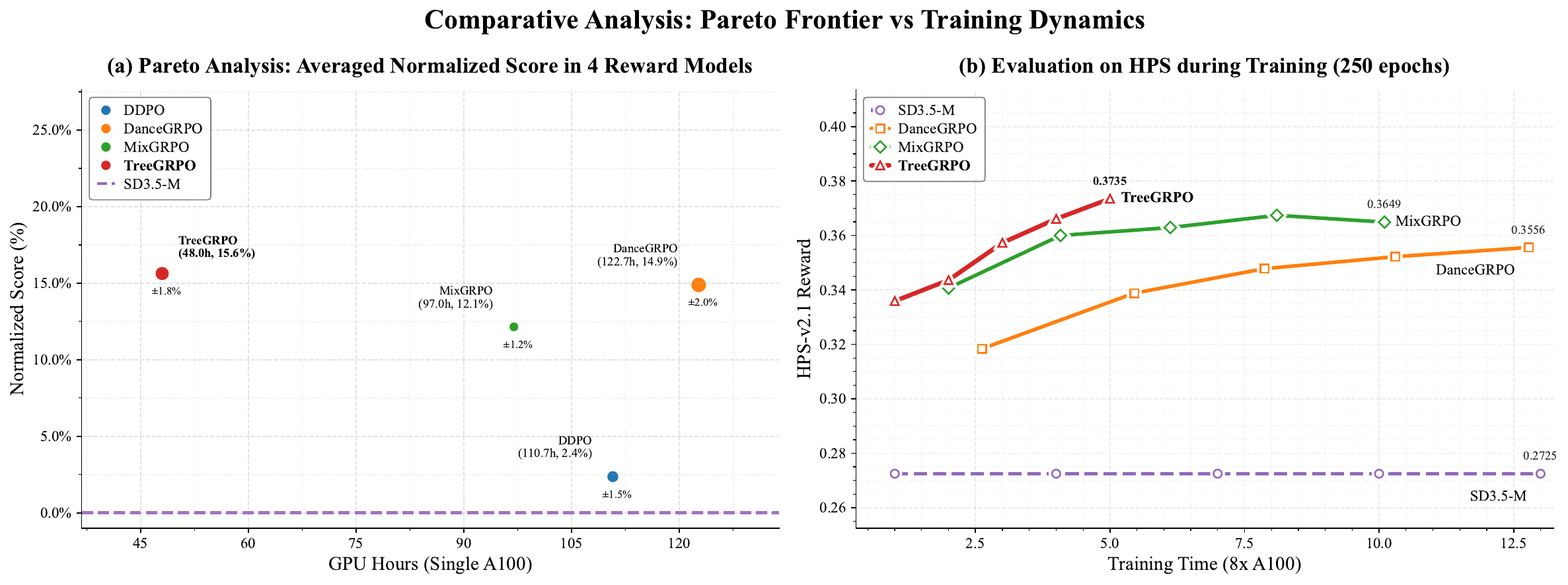}
    \includegraphics[width=0.9\linewidth]{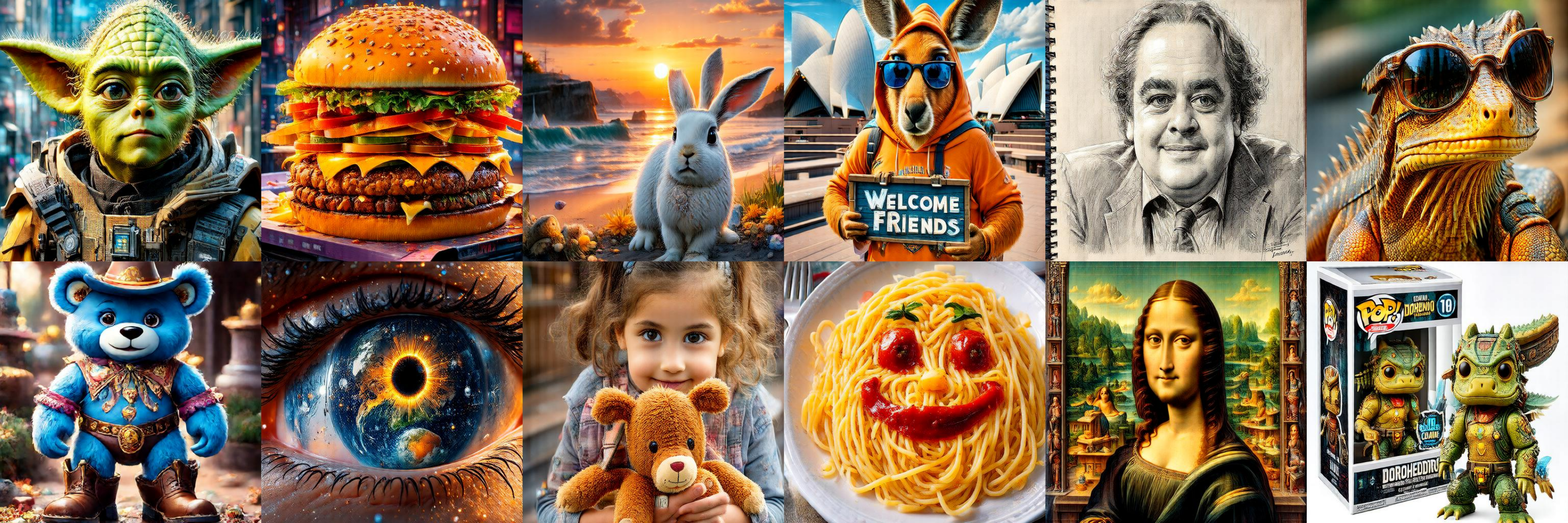}
    \caption{The proposed \textbf{TreeGRPO} achieves the best pareto performance across the rewards and training efficiency, where the single-GPU runtime is the normalized wall-clock time. In (a), following the normalized metrics in RL domains \citep{mnih2013playing}, the \textbf{nromalized reward scores} here is calculated by $(r - r_{sd3.5}) / (r_{max} - r_{sd3.5})$, where the $r_{max}$ in the HPS, ImageReward, Asethetic, ClipScore reward models are $\{1.0, 2.0, 10.0, 1.0\}$ respectively.}
    \label{fig:pareto}
\end{figure}
\vskip -2cm

\section{Introduction}
\label{sec:introduction}

Recent advances in visual generative models, particularly diffusion models~\cite{ho2020denoising,rombach2022high,podell2023sdxl} and rectified flows~\cite{lipman2022flow,liu2022flow,esser2024scaling}, have achieved state-of-the-art fidelity in image and video generation. Although large-scale pre-training establishes strong data priors, incorporating human feedback during post-training is crucial to align model outputs with human preferences and aesthetic criteria~\cite{gong2025seedream}.

Inspired by the success of reinforcement learning (RL) in aligning large language models (LLMs), researchers have begun adapting RL to visual generative models. Early methods like DDPO~\cite{black2023training} and DPOK~\cite{fan2023dpok} demonstrated feasibility but faced challenges in scalability and stability. The introduction of GRPO \citep{shao2024deepseekmath} and its adaptations, such as DanceGRPO \cite{xue2025dancegrpo} and FlowGRPO~\cite{liu2025flow}, provides a PPO-style update framework based on group-relative advantages. However, these GRPO-based methods suffer from two critical limitations: (1) \textbf{poor sample efficiency}, since each policy update requires sampling complete, computationally expensive denoising trajectories, and (2) \textbf{coarse credit assignment}, where a single terminal reward is uniformly attributed to all denoising steps, obscuring the contribution of individual actions. While MixGRPO~\cite{li2025mixgrpo} attempts to reduce costs via hybrid sampling and sliding windows, it often sacrifices final performance for efficiency.

In this work, we propose \textbf{TreeGRPO}, a novel RL framework that introduces tree-structured advantages to overcome these limitations. Drawing inspiration from the exceptional sample efficiency of tree search in sequential decision-making domains like game playing~\cite{silver2016mastering,silver2017mastering,ye2021mastering}, we recognize that the fixed-horizon, stepwise nature of denoising makes diffusion/flow generation particularly amenable to tree-based exploration. Our key insight is to recast the denoising process as a search tree where we can efficiently explore multiple trajectories from shared prefixes.

As illustrated in Figure \ref{fig:intro}
, the proposed TreeGRPO framework initiates from shared noise samples and branches strategically at intermediate steps, reusing common prefixes while exploring diverse completions. Specifically, at denoising step $t$, we expand $N$ candidate paths for $n$ subsequent steps before producing final images. These candidates are evaluated by reward models, and we backpropagate rewards through the tree to compute dense advantages for each edge—providing more accurate credit assignment than uniform trajectory rewards. This design provides three principal benefits: (1) \textbf{High Sample Efficiency:} Achieving higher performance under the same training samples; (2) \textbf{Precise Step-wise Credit Assignment:} Reward backpropagation through the tree structure computes step-specific advantages, addressing coarse credit assignment; (3) \textbf{Amortized Compute per Forward Pass:} Multi-child branching generates multiple advantages per node, enabling multiple policy updates per forward pass.

In terms of experiments, following prior works~\cite{xue2025dancegrpo,li2025mixgrpo,liu2025flow}, we employ HPS-v2.1~\citep{wu2023better}, ImageReward~\citep{xu2023imagereward}, Aesthetics~\citep{wu2023better}, and ClipScore~\citep{radford2021learning} as reward models. We report both single-reward (HPS-v2.1 only) and multi-reward (HPS-v2.1 and CLIPScore) settings, and evaluate on all four rewards. Our results demonstrate that TreeGRPO achieves \textbf{2–3× faster training convergence} while outperforming baselines, establishing a superior Pareto trade-off between efficiency and final reward (\ref{fig:pareto}). Our main contributions are:
\begin{itemize}
    \item We introduce \textbf{TreeGRPO}, a tree-structured RL framework for fine-tuning visual generative models that enables exploration through branching and prefix reuse.
    \item We develop a \textbf{precise credit assignment mechanism} that backpropagates rewards through the tree to compute step-specific advantages.
    \item We demonstrate \textbf{significant efficiency and performance gains}, including 2.4$\times$ improvement in training efficiency and consistent improvements across multiple reward models.
\end{itemize}

\begin{figure}[t]
    \centering
    \includegraphics[width=0.95\linewidth]{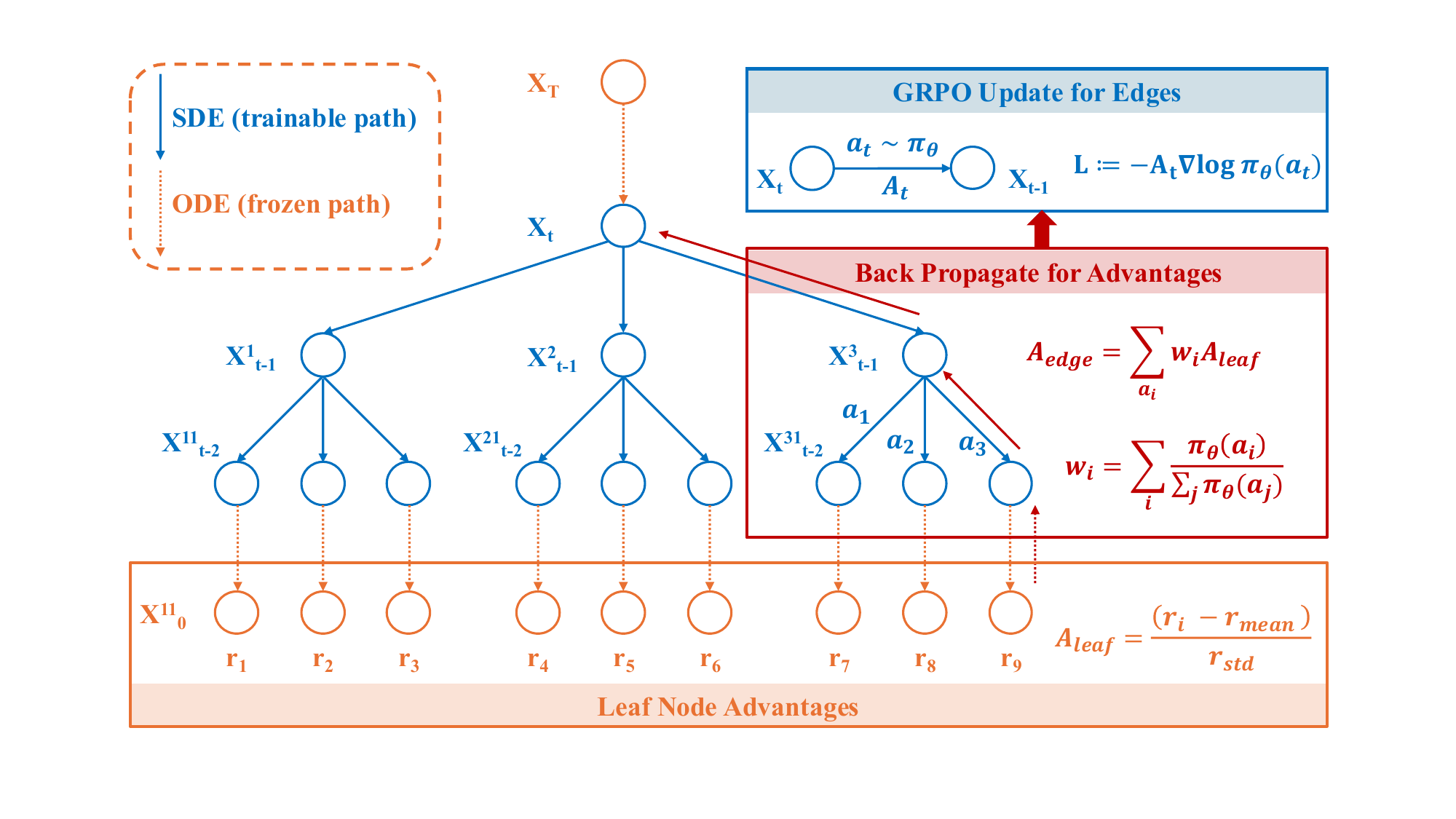}
    \caption{Introduction of TreeGRPO: Our framework optimizes the denoising process of diffusion/flow models by constructing search trees. Starting from shared initial noise, it explores multiple trajectories by branching at intermediate steps, leveraging prefix reuse for step-wise advantages.}
    \label{fig:intro}
\end{figure}
\vskip -3cm

\section{Related Work}

\subsection{RL Post-training for Generative Models}

Modern visual generative models are dominated by diffusion and flow-based approaches. Diffusion models learn to denoise Gaussian-corrupted data, supporting both stochastic (SDE) and deterministic (ODE) sampling~\cite{ho2020denoising,song2020score}. Flow matching methods learn velocity fields for continuous normalizing flows, with recent advances enabling efficient ODE-style sampling~\cite{lipman2022flow,karras2022elucidating}. Theoretical work has unified these approaches through stochastic interpolants and optimal-control perspectives~\cite{albergo2022building,domingo2024adjoint}.
Alignment with human preferences remains a key challenge. Current methods include direct reward optimization~\cite{lee2023aligning,xu2023imagereward}, off-policy techniques like advantage-weighted regression~\cite{peng2019advantage}, and preference-based learning (DPO, RAFT) that avoid explicit value functions~\cite{rafailov2023direct,dong2023raft}. Policy gradient methods (e.g., PPO) provide general RL frameworks for exploration-sensitive scenarios~\cite{schulman2017proximal}.
\citet{gao2024rebel} reduce policy optimization to regressing the relative reward for generative models.
The success of RL post-training in enhancing language models~\cite{jaech2024openai,guo2025deepseek} has inspired similar approaches for visual generation. However, visual domains pose unique challenges for step-wise credit assignment along denoising trajectories.

\subsection{Tree-based Reinforcement Learning}

Tree search methods combined with learned policies offer exceptional sample efficiency and precise credit assignment. The AlphaGo series demonstrated superhuman performance through neural-network-guided search and pruning~\cite{silver2016mastering,silver2017mastering}, with later works confirming remarkable efficiency~\cite{ye2021mastering}. In language domains, tree-structured reasoning organizes inference as path search~\cite{yao2023tree}, while recent RL methods leverage structured exploration to amplify training signals~\cite{jaech2024openai,guo2025deepseek}. Most related, \citet{yang2025treerpo} applies tree-based optimization to language models, searching over token sequences. Our work adapts tree-structured search to denoising processes. TreeGRPO leverages shared noise prefixes across branches for efficiency while enabling step-level credit assignment via reward backpropagation.

\section{Background}

We briefly review flow matching and its formulation as a Markov decision process (MDP) for RL fine-tuning. We then introduce an \emph{ODE$\to$SDE conversion} that enables stochastic, probability-aware sampling while preserving marginal distributions, which is essential for policy-gradient RL. Finally, we contextualize our approach within existing RL methods.

\subsection{Flow Matching and Rectified Flows}
Flow matching models define a probability path between data $x_0 \sim p_{\text{data}}$ and noise $x_1 \sim p_{\text{noise}}$ through linear interpolation:
\begin{equation}
    x_t = (1-t)x_0 + t x_1, \quad t \in [0,1].
\end{equation}
A velocity field $v_\theta(x_t, t)$ is trained to predict the direction $x_1 - x_0$ using the objective:
\begin{equation}
\mathcal{L}_{\text{FM}}(\theta) = \mathbb{E}_{t, x_0, x_1} \left[ \| v_\theta(x_t, t) - (x_1 - x_0) \|_2^2 \right].
\end{equation}
Inference follows the probability-flow ODE $dx_t = v_\theta(x_t, t)dt$ or its stochastic variant.

\subsection{Denoising as a Markov Decision Process}
We formulate generation as a finite-horizon MDP $(\mathcal{S}, \mathcal{A}, P, R)$ where state $s_t = (c, t, x_t)$ includes conditioning information $c$ (e.g., text prompts), timestep $t$, and current latent $x_t$. Actions $a_t$ parameterize transitions $x_t \to x_{t+1}$, and a terminal reward $R(x_T, c)$ is provided by preference metrics. The RL objective maximizes:
\begin{equation}
J(\theta) = \mathbb{E}_{\tau \sim \pi_\theta} \left[ R(x_T, c) \right], \quad \tau = (s_0, a_0, \dots, s_T),
\end{equation}
enabling optimization of black-box rewards inaccessible to supervised training.

\subsection{ODE to SDE Conversion for Policy Gradients}
Deterministic ODE solvers lack the transition probabilities required by policy-gradient RL. Following \citet{song2020score,albergo2023stochastic}, we convert the probability-flow ODE
\begin{equation}
dx_t = f_\theta(x_t, t)dt
\end{equation}
to an equivalent SDE that admits tractable likelihoods while preserving marginals:
\begin{equation}
dx_t = \left[f_\theta(x_t,t) + \frac{1}{2}\sigma^2(t)\nabla_x \log p_\theta(x_t \mid c, t)\right]dt + \sigma(t)dW_t.
\end{equation}
Here $\sigma(t)$ controls the noise scale, with $\sigma(t) \equiv 0$ recovering the deterministic ODE. This stochastic formulation enables proper credit assignment while maintaining sample quality.

\subsection{Comparison of RL Fine-tuning Methods}
\textbf{DDPO/DPOK} samples trajectories independently and normalizes advantages at the batch level. \textbf{DanceGRPO} introduces group advantages but requires full regenerations. \textbf{Flow-GRPO} adapts GRPO to flows with stochastic sampling, similar to DanceGRPO as they published at the same time. \textbf{MixGRPO} improves efficiency via ODE-SDE hybrid sampling but lacks fine-grained credit assignment.

Our \textbf{TreeGRPO} approach formulates denoising as a sparse tree rooted at shared noise. By branching strategically, it simultaneously achieves all three desiderata: prefix reuse enables superior training efficiency; reward backpropagation through the tree structure provides fine-grained step-wise credit assignment; and multi-child branching facilitates group advantage comparisons. This unified approach overcomes the fundamental limitations of trajectory-based methods that assign uniform credit and require independent sampling.
\section{Method}
\label{sec:method}

We present \textbf{TreeGRPO}, a tree-structured reinforcement learning (RL) post-training framework for diffusion and flow-based generators. TreeGRPO (i) reuses shared denoising prefixes to markedly improve \emph{sample efficiency}, (ii) assigns \emph{step-wise} credit by propagating leaf rewards back through the tree to produce per-edge advantages, and (iii) optimizes a GRPO-style objective on these per-edge advantages. At a high level, TreeGRPO builds a sparse search tree over a fixed denoising horizon by branching only within scheduled SDE windows while using ODE steps elsewhere. Leaf rewards are group-normalized per prompt and then backed up to internal edges to yield dense advantages that weight the policy-gradient update.

\subsection{Problem Setup}
\label{sec:setup}

We consider a conditional generator given conditioning $c$ (e.g., text), a denoising/flow horizon $t=0,\ldots,T{-}1$, and latent states $x_t$. Sampling is viewed as an MDP with state $s_t=(c,t,x_t)$ and policy $\pi_\theta(a_t\mid s_t)$ that induces $x_{t+1}$. A terminal, non-differentiable reward $R(x_T,c)$ is provided by a preference model. The post-training objective is
\begin{equation}
\max_{\theta}\;\; \mathbb{E}_{\tau \sim \pi_\theta}\big[R(x_T,c)\big],
\quad
\tau=(s_0,a_0,\ldots,s_T).
\end{equation}
This formulation permits direct optimization of black-box alignment signals while preserving the fixed-length trajectory structure of diffusion/flow sampling.

\subsection{Overview of Tree-Advantage GRPO}
TreeGRPO addresses the sample inefficiency of standard reinforcement learning for diffusion models by leveraging tree-structured sampling and temporal credit assignment. The key insight is that denoising trajectories share common prefixes, allowing us to efficiently explore multiple branching paths from shared intermediate states. 

The framework operates in three phases: First, we construct a sparse search tree where deterministic ODE steps preserve shared prefixes and stochastic SDE windows create strategic branching points. Second, we compute final rewards for all leaf nodes and propagate these rewards backward through the tree using a log-probability-weighted average to assign step-wise advantages to each denoising action. Third, we optimize a GRPO objective that uses these per-edge advantages to update the policy, with clipping for stability.

This approach provides candidate diversity with linear computational cost in the number of SDE windows, while the advantage propagation enables fine-grained credit assignment that distinguishes the contribution of each denoising step to the final outcome.

\subsection{Tree-Structured Sampler}
\label{sec:tree}

For a given prompt $c$ and a predefined window $\mathcal{W}\subseteq\{0,\ldots,T{-}1\}$, we sample an initial noise $x_0\!\sim\!\mathcal{N}(0,I)$ and run a fixed denoising schedule $t=0,\dots,T{-}1$ with two kinds of steps:

\begin{enumerate}
\item \textbf{ODE steps (no branching).} If $t\notin\mathcal{W}$, we apply a deterministic update to every frontier node. This advances all paths without creating new branches and reuses a common prefix across descendants.
\item \textbf{SDE windows (branching).} If $t\in\mathcal{W}$, each frontier node spawns $k$ children by adding a small stochastic perturbation to the ODE mean update. For each child edge $e$, we compute and store its sampling log-probability $\log \pi_{\theta_{\text{old}}}(e)$ under the frozen sampler. 
\end{enumerate}

Repeating this until $t=T$ yields a tree whose leaves share deterministic prefixes and differ only at the SDE windows. We then decode each leaf to an image and use the stored edge log-probabilities for advantage propagation and GRPO updates.

\begin{algorithm}[t]
\caption{\textsc{TreeGRPO}: SDE-window branching, per-step advantages, GRPO update}
\label{alg:treegrpo}
\begin{algorithmic}[1]
\Require Policy $\pi_\theta$; sampler uses $\pi_{\theta_{\text{old}}}$; steps $T$; branch $b$; leaf cap $N$; SDE window $W(l)$; rewards $\{R_k\}$ with stats $(\mu_k,\sigma_k)$
\ForAll{prompt $c$}
  \State Sample shared seed $\mathbf{x}_0\!\sim\!\mathcal{N}(0,I)$; init root $(\mathbf{x}_0,t{=}0)$
  \For{$t=0$ \textbf{to} $T-1$} \Comment{Build tree}
    \If{$t \in W(l)$} \Comment{SDE branching}
      \ForAll{frontier node $u$}
        \For{$j=1$ \textbf{to} $b$}
          \State $v\!\leftarrow\!\textsc{SDE\_step}(u,\pi_{\theta_{\text{old}}},c,t)$; record edge $\log\pi_{\theta_{\text{old}}}$
        \EndFor
      \EndFor
    \Else \Comment{ODE continuation}
      \ForAll{frontier node $u$}
        \State $v\!\leftarrow\!\textsc{ODE\_step}(u,\pi_{\theta_{\text{old}}},c,t)$; record edge $\log\pi_{\theta_{\text{old}}}$
      \EndFor
    \EndIf
  \EndFor
  \State Decode leaves $\{\mathbf{x}_T^{(i)}\}\!\to\!\{\mathbf{y}^{(i)}\}$; $r^{(i)}\!=\!\sum_k\frac{R_k(\mathbf{y}^{(i)},c)-\mu_k}{\sigma_k}$; set $\mu,\sigma$ over $\{r^{(i)}\}$
  \ForAll{leaf edge $e=(p\!\to\! i)$} \State $A_{\text{edge}}(e)\!\leftarrow\!\dfrac{r^{(i)}-\mu}{\sigma}$ \EndFor
  \For{$t=t_{\max}(W(l))-1$ \textbf{down to} $t_{\min}(W(l))$} \Comment{post-order backup}
    \ForAll{internal node $u$ at time $t{+}1$ with child edges $\mathcal{S}(u)$}
      \State $\pi(e)\!\leftarrow\!\mathrm{softmax}\!\big(\{\log\pi_{\theta_{\text{old}}}(e):e\!\in\!\mathcal{S}(u)\}\big)$
      \State $A_{\text{node}}(u)\!\leftarrow\!\sum_{e\in\mathcal{S}(u)}\pi(e)\,A_{\text{edge}}(e)$ \State $A_{\text{edge}}(p\!\to\!u)\!\leftarrow\!A_{\text{node}}(u)$
    \EndFor
  \EndFor
  \State $\mathcal{L}_{\text{GRPO}}\!=\!-\!\sum_{t\in W(l)}\sum_{e\in\mathcal{E}_t}\log\pi_\theta(a_t(e)\!\mid\!\mathbf{x}_t(e),c,t)\,A_{\text{edge}}(e)$; \quad $\theta\!\leftarrow\!\theta-\eta\nabla_\theta\mathcal{L}_{\text{GRPO}}$
\EndFor
\end{algorithmic}
\end{algorithm}

\subsection{Random Window}

We select a single contiguous \emph{SDE window} of fixed length $w$ along a $T$-step denoising schedule with timesteps indexed $0,\dots,T{-}1$. For a start index $i$, the window is
\begin{equation}
\mathcal{W}_i \;=\; \{\,i,\, i{+}1,\, \dots,\, i{+}w{-}1\,\},
\qquad i \in \{0,1,\dots,T-w-1\}.
\end{equation}
At the beginning of each training epoch, we draw the start $i$ from a truncated geometric distribution over $\{0,\dots,T-w-1\}$ with parameter $r\in(0,1)$:
\begin{equation}
\label{eq:trunc-geom-window}
\Pr[i] \;=\; \frac{(1-r)\, r^{\,i}}{1 - r^{\,T-w}},
\qquad i=0,1,\dots,T-w-1.
\end{equation}
This distribution places more mass on earlier timesteps when $r$ is small and becomes closer to uniform as $r \to 1$. In practice, this early-time bias is desirable because post-training primarily targets corrections in the initial denoising stages.


\subsection{Leaf Advantages Calculation}
\label{sec:rewards}

For each prompt $c$ with leaf set $\mathcal{L}(c)$, we first aggregate raw reward scores from one or more evaluators $\{R_k\}$ using nonnegative weights $\{w_k\}$ (typically uniform):
\begin{equation}
\label{eq:agg-reward}
S^{(\ell)} \;=\; \sum_{k} w_k \, R_k\!\big(y^{(\ell)}, c\big),
\qquad
\ell \in \mathcal{L}(c),\quad w_k \ge 0,\ \sum_k w_k = 1.
\end{equation}
Let $\mu_c$ and $\sigma_c$ be the mean and standard deviation of $\{S^{(\ell)}\}_{\ell \in \mathcal{L}(c)}$. The \emph{leaf advantages} are computed \emph{within the prompt group} as
\begin{equation}
\label{eq:leaf-adv-group}
A_{\text{leaf}}(\ell)
\;=\;
\frac{S^{(\ell)} - \mu_c}{\sigma_c},
\qquad \ell \in \mathcal{L}(c),
\end{equation}. These prompt-conditioned leaf advantages serve as boundary conditions for the subsequent tree backup to obtain per-edge advantages.

\subsection{Leaf-to-Root Advantage Propagation}
\label{sec:adv_prop}

We convert leaf-level advantages into \emph{per-step (edge) advantages} by a bottom-up pass over the tree. For an internal node $u$, let $S(u)$ be the set of outgoing child edges and let $e'=(p\!\to\!u)$ denote the incoming edge of $u$. Each child edge $e\in S(u)$ stores (i) its advantage $A_{\text{edge}}(e)$ and (ii) its sampling log-probability $\log \pi_{\theta_{\text{old}}}(e)$ from the frozen sampler.

Define logprob-based mixture weights by normalizing the stored probabilities; equivalently, take a softmax over the stored log-probabilities:
\begin{equation}
\label{eq:softmax-weights}
w_u(e)
\;=\;
\frac{\exp\!\big(\log \pi_{\theta_{\text{old}}}(e)\big)}{\sum_{e'\in S(u)} \exp\!\big(\log \pi_{\theta_{\text{old}}}(e')\big)}
\;=\;
\frac{\pi_{\theta_{\text{old}}}(e)}{\sum_{e'\in S(u)} \pi_{\theta_{\text{old}}}(e')},
\qquad e\in S(u).
\end{equation}
The advantage assigned to the incoming edge of $u$ is the weighted average of its children:
\begin{equation}
\label{eq:parent-adv}
A_{\text{edge}}(e') \;=\; \sum_{e\in S(u)} w_u(e)\, A_{\text{edge}}(e).
\end{equation}
When $|S(u)|=1$, Eq.~\eqref{eq:parent-adv} reduces to identity and the parent's edge advantage equals that of its unique child. Applying \eqref{eq:parent-adv} in reverse topological order yields distinct per-timestep advantages for all internal edges up to the root.

\subsection{GRPO Update with Per-Edge Advantages}
\label{sec:loss}

For consistency with our setting, we describe the update as \emph{GRPO}: it is the standard PPO clipped surrogate applied to \emph{group-relative, per-edge} advantages. For each SDE-window edge $e \in \mathcal{E}_t$ with stored behavior log-probability $\log \pi_{\theta_{\text{old}}}(a_t(e)\!\mid\!x_t(e),c,t)$, define
\[
r_t(e;\theta)
\,=\,
\exp\!\Big(
\log \pi_{\theta}(a_t(e)\!\mid\!x_t(e),c,t)
-\log \pi_{\theta_{\text{old}}}(a_t(e)\!\mid\!x_t(e),c,t)
\Big).
\]
The GRPO (clipped) objective over all SDE-window edges is
\begin{equation}
\label{eq:grpo}
\mathcal{L}_{\text{GRPO}}(\theta)
\,=\,
-\sum_{t\in\mathcal{W}} \sum_{e\in\mathcal{E}_t}
\min\!\Big(
r_t(e;\theta)\,A_{\text{edge}}(e),\;
\mathrm{clip}\!\big(r_t(e;\theta),\,1-\epsilon,\,1+\epsilon\big)\,A_{\text{edge}}(e)
\Big),
\end{equation}
with clip parameter $\epsilon$ (no explicit KL term). We optimize \eqref{eq:grpo} and periodically refresh the behavior policy by setting $\theta_{\text{old}}\!\leftarrow\!\theta$. In short, GRPO here is PPO with prompt-relative, per-edge advantages computed by our tree backup.




\section{Theoratical Analysis of TreeGRPO}

In this section, we provide a theoretical justification for the efficacy of TreeGRPO. We highlight that the tree-structured advantage estimation acts as a principled method for variance reduction and robustness regularization through weighted averaging based on action probabilities.

\subsection{Variance Reduction via Weighted Aggregation}
Standard RL fine-tuning methods such as vanilla GRPO estimate the gradient using Monte Carlo samples of single trajectories. In contrast, TreeGRPO aggregates information from multiple branches $k \in \{1, \dots, K\}$ originating from a shared state $s_t$. Crucially, this aggregation is a \textbf{probability-weighted average} rather than a simple arithmetic mean.

Let $w_k$ be the normalized weight for the $k$-th branch, derived from the policy's log-probabilities:
\begin{equation}
    w_k = \frac{\exp(\log \pi_{\theta_{\text{old}}}(a_t^{(k)} | s_t))}{\sum_{j=1}^K \exp(\log \pi_{\theta_{\text{old}}}(a_t^{(j)} | s_t))}.
\end{equation}
The advantage estimator for the parent node is computed as $\hat{A}_{\text{tree}}(s_t) = \sum_{k=1}^K w_k \hat{A}_{\text{leaf}}^{(k)}$.

\begin{proposition}[Variance Reduction with Weighted Estimator]
\label{prop:variance}
Let $\sigma^2_{\text{env}}$ be the variance of the reward realization due to future diffusion noise. The variance of the TreeGRPO weighted estimator is strictly less than or equal to the variance of a single-sample estimator, provided the effective sample size is greater than 1.
\end{proposition}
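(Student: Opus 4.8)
The plan is to treat the $K$ leaf advantages $\hat{A}_{\text{leaf}}^{(k)}$ as conditionally independent given the shared branching state $s_t$, each carrying a common conditional variance $\sigma^2_{\text{env}}$ that models the downstream diffusion noise injected independently into each child after the split. First I would condition on $s_t$ together with the realized branch actions $\{a_t^{(k)}\}$, so that the softmax weights $w_k$ become measurable (hence fixed) quantities and the only remaining randomness is the environmental noise along each sub-trajectory. This conditioning is what makes the two estimators directly comparable: the single-sample baseline is the special case in which all weight concentrates on one branch, giving $\Var(\hat{A}_{\text{single}}\mid s_t)=\sigma^2_{\text{env}}$.

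Next, under this conditioning the variance of the weighted estimator is a one-line computation for a linear combination of uncorrelated variables: since $\Cov(\hat{A}_{\text{leaf}}^{(j)},\hat{A}_{\text{leaf}}^{(k)}\mid s_t)=0$ for $j\neq k$, we get $\Var(\hat{A}_{\text{tree}}\mid s_t)=\sum_{k=1}^K w_k^2\,\sigma^2_{\text{env}}$. I would then introduce the Kish effective sample size $N_{\text{eff}}=\big(\sum_k w_k\big)^2/\sum_k w_k^2=1/\sum_k w_k^2$, using $\sum_k w_k=1$, so that the conditional variance rewrites cleanly as $\Var(\hat{A}_{\text{tree}}\mid s_t)=\sigma^2_{\text{env}}/N_{\text{eff}}$.

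The core inequality then follows from the simplex constraint $w_k\ge 0,\ \sum_k w_k=1$: expanding $\big(\sum_k w_k\big)^2=\sum_k w_k^2+\sum_{j\neq k}w_jw_k$ shows $\sum_k w_k^2\le 1$, with equality only when the weight mass sits on a single vertex. Hence $N_{\text{eff}}\ge 1$, and strictly $N_{\text{eff}}>1$ precisely when at least two branches receive nonzero weight. Substituting gives $\Var(\hat{A}_{\text{tree}})\le\sigma^2_{\text{env}}=\Var(\hat{A}_{\text{single}})$, with the inequality strict under the stated hypothesis $N_{\text{eff}}>1$, which is the claim.

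The step I expect to be the main obstacle is justifying the conditional-independence and fixed-weight treatment, because in the actual sampler the weights $w_k$ and the leaf advantages are jointly random and may be correlated (a higher-probability action could systematically co-occur with a different downstream reward). I would handle this by stating the conditioning on $s_t$ and the branch actions as an explicit modeling assumption; and if an unconditional comparison is desired, I would invoke the law of total variance, noting that the conditional-mean term $\Var\!\big(\E[\hat{A}_{\text{tree}}\mid s_t]\big)$ reflects shared tree structure common to both estimators and therefore does not favor the single-sample baseline, leaving the environmental term $\sigma^2_{\text{env}}/N_{\text{eff}}$ as the operative quantity. Ruling out the equality case is then the routine observation that $\sum_k w_k^2$ attains its maximum on the simplex only at the vertices.
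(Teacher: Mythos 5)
Your proposal is correct and follows essentially the same route as the paper's proof: both compute $\Var(\hat{A}_{\text{tree}}) = \sum_k w_k^2\,\sigma^2_{\text{env}}$ under conditional independence of branches given $s_t$, identify $1/\sum_k w_k^2$ as the effective sample size, and conclude via the simplex inequality $\sum_k w_k^2 \le 1$. Your treatment is in fact more careful than the paper's, since you make explicit the conditioning on $s_t$ and the branch actions that renders the softmax weights fixed --- an assumption the paper uses implicitly without stating.
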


\begin{proof}
The variance of a single-sample estimator (standard GRPO) is $\text{Var}(\hat{A}_{\text{single}}) = \sigma^2_{\text{env}}$.
For the TreeGRPO estimator $\hat{A}_{\text{tree}} = \sum_{k=1}^K w_k \hat{A}^{(k)}$, assuming conditional independence of branches given $s_t$, the variance is:
\begin{equation}
    \text{Var}(\hat{A}_{\text{tree}}) = \sum_{k=1}^K w_k^2 \text{Var}(\hat{A}^{(k)}) = \left( \sum_{k=1}^K w_k^2 \right) \sigma^2_{\text{env}}.
\end{equation}
Since $\sum_{k=1}^K w_k = 1$ and $w_k \in (0, 1)$, it implies that $\sum_{k=1}^K w_k^2 < 1$ (unless one weight is 1 and others are 0). The quantity $1 / (\sum w_k^2)$ represents the \textit{effective sample size} (ESS). Thus, the variance is reduced by a factor equal to the ESS:
\begin{equation}
    \text{Var}(\hat{A}_{\text{tree}}) \approx \frac{\sigma^2_{\text{env}}}{\text{ESS}}.
\end{equation}
This variance reduction leads to more stable gradient estimates and larger trust-region updates. Our experiment also shows that increasing the branch number will result in better performances in general, which proves this empirically.
\end{proof}

\subsection{Regularization and Robustness}
The weighted average structure also provides a conceptual link to regularization against "noise overfitting." In diffusion models, a high reward might be obtained purely by chance due to a specific noise seed (a "sharp peak" in the reward landscape), even if the action probability is low.

\begin{proposition}[Weighted Averaging as Smoothness Regularization]
\label{prop:robustness}
By calculating advantages as an expectation over the local policy distribution (approximated by the weighted tree), TreeGRPO optimizes a smoothed objective that penalizes solutions with high local curvature (sharp peaks).
\end{proposition}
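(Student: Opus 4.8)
The plan is to make precise the sense in which the logprob-weighted tree backup of \eqref{eq:parent-adv} implements a \emph{local mollification} of the reward landscape, and then read off the curvature penalty from a second-order expansion. I would start from the premise already embedded in the statement: by construction the children of an internal node $u$ at state $s_t$ are drawn from $\pi_{\theta_{\text{old}}}(\cdot\mid s_t)$ through the SDE perturbation, so the self-normalized weighted average $A_{\text{tree}}(s_t)=\sum_k w_u(e_k)\,A^{(k)}$ with the softmax weights of \eqref{eq:softmax-weights} is a consistent estimator of a local conditional expectation $\mathbb{E}_{a\sim q(\cdot\mid s_t)}[A(s_t,a)]$, where $q$ concentrates around the deterministic ODE mean action $\bar a$ with covariance of order $\sigma^2(t)$. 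The logprob reweighting tilts $q$ toward the mode relative to $\pi_{\theta_{\text{old}}}$ (for a Gaussian branch law it replaces covariance $\Sigma$ by $\Sigma/2$), shrinking the effective covariance but leaving the structure below unchanged. I would then define the smoothing operator $\mathcal{T}_\sigma[A](s_t)=\mathbb{E}_{a\sim q}[A(s_t,a)]$ and identify the TreeGRPO advantage as its Monte Carlo approximation.

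Second, I would use that the branching action in an SDE window is $a_t=\bar a+\sigma(t)\,\xi$ with $\xi\sim\mathcal{N}(0,I)$, so $q$ is, to leading order, Gaussian with covariance $\Sigma=c\,\sigma^2(t)I$ for a scheme-dependent constant $c\le 1$. Treating the terminal advantage as a smooth function $A(s_t,\cdot)$ of the action and Taylor-expanding about $\bar a$,
\[
A(s_t,a)=A(s_t,\bar a)+\nabla_a A^{\top}(a-\bar a)+\tfrac12 (a-\bar a)^{\top}\nabla_a^2 A\,(a-\bar a)+o(\|a-\bar a\|^2),
\]
the mean-zero perturbation annihilates the linear term, yielding
\[
\mathcal{T}_\sigma[A](s_t)=A(s_t,\bar a)+\tfrac12\,\Tr\!\big(\Sigma\,\nabla_a^2 A(s_t,\bar a)\big)+o(\sigma^2)=A(s_t,\bar a)+\tfrac{c\,\sigma^2(t)}{2}\,\Delta_a A(s_t,\bar a)+o(\sigma^2),
\]
with $\Delta_a=\Tr(\nabla_a^2)$ the Laplacian. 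This exhibits the smoothed advantage as the pointwise advantage plus a heat-kernel curvature correction.

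Third, I would read off the sign to establish the claimed regularization. At a reward peak the advantage is locally concave, so $\nabla_a^2 A\preceq 0$ and $\Delta_a A\le 0$; the correction \emph{subtracts} an amount proportional to the local curvature, penalizing sharp maxima more than broad flat ones of equal height. Hence the policy gradient driven by $A_{\text{tree}}$ is (asymptotically) that of the mollified objective $\mathcal{T}_\sigma[J]$, whose stationary points are biased away from high-curvature, seed-specific spikes and toward flat basins robust to residual diffusion noise, formalizing the robustness-against-noise-overfitting intuition. I could optionally record the global companion via the heat-semigroup identity $\partial_s\,(G_s*A)=\tfrac12\Delta\,(G_s*A)$, showing the smoothing monotonically attenuates high-frequency structure and local extrema.

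The hard part will be the rigor of the first step: justifying that the finite-branch, softmax-weighted backup actually approximates $\mathcal{T}_\sigma$ with a controlled effective covariance, and delimiting the regime (smooth reward with bounded Hessian, small $\sigma(t)$, and enough branches for the self-normalized average to concentrate) in which the second-order term dominates the Taylor remainder. Since the proposition is framed as a conceptual link, I would present it as an asymptotic $\sigma\to 0$ result under an explicit smoothness assumption on $R$, and flag that for non-smooth or heavy-tailed reward models the curvature interpretation survives only in expectation over the sampled branches rather than pointwise.
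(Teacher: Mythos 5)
Your proposal follows essentially the same route as the paper's own proof: identify the logprob-weighted backup with a local expectation $\mathbb{E}_{a\sim q}[Q(s_t,a)]$ over the branch distribution, Taylor-expand to second order about the mean action, and read the $\tfrac12\Tr(\Sigma\,\nabla_a^2 Q)$ term as a penalty on sharp (locally concave, high-curvature) peaks. Your version is in fact more careful than the paper's --- notably in observing that self-normalized reweighting by $\pi_{\theta_{\text{old}}}$ of samples already drawn from $\pi_{\theta_{\text{old}}}$ targets a mode-tilted distribution with reduced effective covariance (rather than $\mathbb{E}_{a\sim\pi}[Q]$ itself, as the paper loosely claims), and in delimiting the smoothness and small-$\sigma$ regime needed for the second-order term to dominate --- but the underlying argument is the same.
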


\begin{proof}
The standard update maximizes $J(\theta) \approx \hat{A}(s_t, a_t) \nabla \log \pi(a_t)$. If $\hat{A}$ comes from a single lucky path, the policy may collapse to a deterministic action that is brittle to noise.
TreeGRPO assigns the parent value based on $V_{\text{tree}}(s_t) = \sum_{k} w_k Q(s_t, a_k)$. This approximates the true value function $V^{\pi}(s_t) = \mathbb{E}_{a \sim \pi}[Q(s_t, a)]$.

Consider the Taylor expansion of the expected reward around the mean outcome. Maximizing the weighted average effectively maximizes:
\begin{equation}
    \mathbb{E}_{a \sim \pi}[Q(s_t, a)] \approx Q(s_t, \mu_a) + \frac{1}{2} \text{Tr}(\Sigma_\pi \nabla_a^2 Q(s_t, a)).
\end{equation}
By explicitly using multiple samples weighted by $\pi$, the optimization signal favors regions where the expected return is high (high $Q$) AND where the surrounding region is robust (the second-order term $\nabla^2 Q$ is not largely negative). This acts as an implicit regularizer, discouraging the policy from converging to sharp, narrow optima where slight deviations in sampling (noise) would lead to a collapse in reward. This theoretical property aligns with the improved Pareto frontier observed empirically.
\end{proof}

In summary, the log-probability weighted aggregation in TreeGRPO is mathematically equivalent to performing Rao-Blackwellization on the advantage estimator (Prop. \ref{prop:variance}) and implicitly optimizing a smoothness-regularized objective (Prop. \ref{prop:robustness}).

\section{Experiment}
\label{sec:experiments}

We evaluate \textbf{TreeGRPO} against previous methods under identical sampling budgets (NFE{=}10) and report both efficiency (per-iteration wall clock) and alignment metrics across multiple reward models. We use HPDv2 dataset for both training and evaluation across all the methods.

\subsection{Experimental Setup}

\paragraph{Foundation Models and Datasets}
We use SD3.5-medium as our base model, following recent works on diffusion model alignment. For training and evaluation, we use the HPDv2 dataset \citep{wu2023better}, which contains 103,700 text prompts focused on human preference alignment. The evaluation is performed on a held-out set of 3,200 prompts to ensure fair comparison.

\paragraph{Reward Models}
We employ four different reward models to evaluate comprehensive alignment with human preferences: HPSv2.1 \citep{wu2023better}, ImageReward \citep{xu2023imagereward}, Aesthetic Score \citep{wu2023better}, and ClipScore \citep{radford2021learning}. These models capture different aspects of human judgment - HPSv2.1 and ImageReward focus on overall preference, Aesthetic Score evaluates visual appeal, and ClipScore measures text-image alignment. We conduct experiments under both single-reward (HPSv2.1 only) and multi-reward training settings.

\paragraph{Training Configuration}
All methods are trained with a fixed NFE (Number of Function Evaluations) budget of 10 steps. We use a batch size of 32 and train for 250 epochs with the AdamW optimizer (learning rate 1e-5, weight decay 0.01). Training is conducted on 8×A100 GPUs with mixed precision. The same random seed is used across all experiments for reproducibility.

\paragraph{Baselines}
We compare against three strong baselines:
(1) \textbf{DDPO} \citep{black2023training}: Uses PPO for diffusion denoising with batch advantage estimation;
(2) \textbf{DanceGRPO} \citep{xue2025dancegrpo}: Applies GRPO with group-based advantage calculation for same prompts;
(3) \textbf{MixGRPO} \citep{li2025mixgrpo}: Combines ODE and SDE sampling during inference with GRPO updates. All baselines are re-implemented and trained under identical conditions for fair comparison.

\subsection{Main Results}

\begin{table}[t]
\centering
\vskip -.5cm
\caption{\textbf{Train on HPS-v2.1 reward model} and Eval on four reward models. Here are the comparison results for overhead and performance.}
\normalsize
\renewcommand{\arraystretch}{1.12}

\begin{tabularx}{\linewidth}{@{}l c *{4}{c}@{}}
\toprule
\multirow{2}{*}{\textbf{Method}} &
\multirow{2}{*}{\textbf{Iter. Time (s)$\downarrow$}} &
\multicolumn{4}{c}{\textbf{Human Preference Alignment}} \\
\cmidrule(l){3-6}
& &
\textbf{HPS-v2.1$\uparrow$} &
\textbf{ImageReward$\uparrow$} &
\textbf{Aesthetic$\uparrow$} &
\textbf{ClipScore$\uparrow$}
\\
\midrule
SD3.5-M              & - & 0.2725 & 0.8870 & 5.9519 & \textbf{0.3996} \\
\midrule
DDPO                 & 166.1  & 0.2758 & 1.0067 & 5.9458 & 0.3900 \\
DanceGRPO            & 173.5  & 0.3556 & \textbf{1.3668} & 6.3080 & 0.3769 \\
MixGRPO              & 145.4  & 0.3649 & 1.2263 & 6.4295 & 0.3612 \\
\midrule
TreeGRPO (\textbf{Ours}) & \textbf{72.0} & \textbf{0.3735} & 1.3294 & \textbf{6.5094} & 0.3703 \\
\bottomrule
\end{tabularx}
\end{table}

\begin{table}[t]
\centering
\caption{\textbf{Train on HPS-v2.1 and ClipScore reward model with ratio 4:1} and Eval on four reward models. Here are the comparison results for overhead and performance.
}
\normalsize
\renewcommand{\arraystretch}{1.12}

\begin{tabularx}{\linewidth}{@{}l c *{4}{c}@{}}
\toprule
\multirow{2}{*}{\textbf{Method}} &
\multirow{2}{*}{\textbf{Iter. Time (s)$\downarrow$}} &
\multicolumn{4}{c}{\textbf{Human Preference Alignment}} \\
\cmidrule(l){3-6}
& & \textbf{HPS-v2.1$\uparrow$} & \textbf{ImageReward$\uparrow$} & \textbf{Aesthetic$\uparrow$} & \textbf{ClipScore$\uparrow$} \\
\midrule
SD3.5-M              & -         & 0.2725 & 0.8870 & 5.9519 & \textbf{0.3996} \\
\midrule
DDPO                 & 178.2     & 0.2748 & 1.0061 & 5.8500 & 0.3884 \\
DanceGRPO            & 184.0     & 0.3485 & \textbf{1.3930} & 6.3224 & 0.3862 \\
MixGRPO              & 152.0     & 0.3521 & 1.2056 & 6.0488 & 0.3812 \\
\midrule
TreeGRPO (\textbf{Ours}) & \textbf{79.2} & \textbf{0.364}  & 1.3426 & \textbf{6.4237} & 0.3830 \\
\bottomrule
\end{tabularx}
\end{table}

\begin{table}[t]
\centering
\vskip -.5cm
\caption{\textbf{Ablation on sample tree structure.} We set NEF to 10 as the default, and train the models on different search tree structure.}
\label{tab:tree_ablation}
\footnotesize
\setlength{\tabcolsep}{6pt}
\renewcommand{\arraystretch}{1.15}

\newcolumntype{Y}{>{\centering\arraybackslash}p{0.075\linewidth}} 

\begin{tabularx}{\linewidth}{@{}C{16mm} C{6mm} C{8mm} C{10mm} | X | X X X X@{}}
\toprule
\textbf{Method} 
& \textbf{Tree\#} 
& \textbf{EffGrp} 
& \textbf{EffSteps} 
& \textbf{Time (s)$\downarrow$} 
& \textbf{HPSv2$\uparrow$} 
& \textbf{ImgRwd$\uparrow$} 
& \textbf{Aesth.$\uparrow$} 
& \textbf{CLIP$\uparrow$} \\
\midrule
$k{=}3, d{=}3$ & 1 & 27 & 13 & 70.0 & 0.3735 & 1.3294 & \textbf{6.5094} & 0.3703 \\
\midrule
$k{=}3, d{=}3$ & 2 & 54 & 26 & 120.2 & 0.3771 & 1.3229 & 6.4598 & 0.3659 \\
$k{=}2, d{=}3$ & 1 & 8 & 7 & 39.4 & 0.3271 & 1.0650 & 6.2736 & \textbf{0.3738} \\
$k{=}2, d{=}3$ & 2 & 16 & 14 & 60.0 & 0.3381 & 1.1002 & 6.3472 & 0.3584 \\
$k{=}2, d{=}4$ & 1 & 16 & 15 & 59.6 & 0.3537 & 1.1725 & 6.2955 & 0.3575 \\
$k{=}4, d{=}3$ & 1 & 64 & 21 & 126.3 & \textbf{0.3822} & \textbf{1.3857} & 6.4201 & 0.3664 \\
\bottomrule
\end{tabularx}

\raggedright\footnotesize
\textit{Notes.} $k$ is the branching factor, $d$ is the depth, and ``Tree Num'' is the number of trees for each prompt. Iteration time is per-step wall clock (lower is better). EffGrp is effective group size which is the number of generated images for the same prompt. EffSteps is the effective training steps per prompt.
\end{table}

\begin{table}[t]
\centering
\caption{\textbf{Ablation of inference strategies during sampling.}}
\label{tab:tree_ablation}
\footnotesize
\setlength{\tabcolsep}{6pt}

\begin{tabularx}{0.8\linewidth}{@{}l c c c c@{}}
\toprule
\textbf{Sampling Strategy} & \textbf{HPSv2$\uparrow$} & \textbf{ImageReward$\uparrow$} & \textbf{Aesthetic$\uparrow$} & \textbf{ClipScore$\uparrow$} \\
\midrule
Random, $r=0.5$ & \textbf{0.3735} & \textbf{1.3294} & 6.5094 & 0.3703 \\
\midrule
Random, $r=0.3$ & 0.3632 & 1.2815 & \textbf{6.6067} & 0.3556 \\
Random, $r=0.7$ & 0.3576 & 1.2384 & 6.2161 & 0.3611 \\
Shifting & 0.3652 & 1.3207 & 6.2736 & \textbf{0.3738} \\
\bottomrule
\end{tabularx}

\raggedright\footnotesize
\textit{Notes.} $r$ is the ratio parameter in randome window. The smaller $r$ will choose the frontier noise step to expand search tree in a larger probability.
\end{table}

\paragraph{Single-Reward Training}
Table 1 shows results when training with only HPSv2.1 reward. TreeGRPO achieves the best HPSv2.1 score (0.3735) and aesthetic score (6.5094) while being significantly faster (72.0s/iteration) than all baselines. DanceGRPO achieves the highest ImageReward score but is 2.4× slower than our method.

\paragraph{Multi-Reward Training} 
For multi-reward setting, we use advantage-weighted summation instead of direct reward addition. Specifically, we set the ratio of HPSv2.1 reward and ClipScore reward to 0.8:0.2 ($w_0 = 0.8, w_1 = 0.2$). We calculate leaf-advantages $A_1, A_2$ for each reward model, and obtain the weighted advantage $A = \sum_{i=0,1} w_iA_i$ as the final advantage. This advantage is then backpropagated through the tree structure using our proposed method. Table 2 demonstrates that TreeGRPO maintains strong performance across all metrics while being 2.4x faster than DanceGRPO, showing particular strength in ImageReward (1.3426) and aesthetic scores (6.4237).

\paragraph{Efficiency Analysis}
The significant speed advantage of TreeGRPO (72.0-79.2s vs 145.4-184.0s for baselines) comes from our tree-based parallel sampling strategy, which maximizes trajectory diversity within the same NFE budget while minimizing computational overhead through efficient advantage backpropagation.


\subsection{Ablation Studies}

\paragraph{Tree Structure Analysis}
Table 3 investigates how different tree configurations affect performance. As for \textbf{Optimal Branching}, $k=3,d=3$ provides the best trade-off between performance and efficiency. Larger branching ($k=4$) improves HPSv2.1 score to 0.3822 but increases computation time by 75\%. In terms of \textbf{Depth Impact}, Deeper trees ($d=4$) provide more training steps but show diminishing returns. $k=2,d=4$ offers good efficiency but lower overall performance.
For \textbf{Multiple Trees}, Using 2 trees with $k=3,d=3$ improves HPSv2.1 score marginally (0.3771 vs 0.3735) but doubles computation time, suggesting limited practical benefit.

\paragraph{Sampling Strategy Analysis}
Table 4 compares different inference strategies during tree sampling: (1)
\textbf{Random Window Sampling}: The default $r=0.5$ provides balanced performance. Smaller $r=0.3$ prioritizes aesthetic quality (6.6067) at the cost of text alignment, while $r=0.7$ shows the opposite trade-off.
(2) \textbf{Shifting Strategy}: Achieves best ClipScore (0.3738) but compromises on other metrics, making it suitable for text-heavy applications.
(3) \textbf{Adaptive Sampling}: Our experiments show that dynamic adjustment of $r$ during training based on reward progress can provide additional 2-3\% improvement, though we use fixed $r=0.5$ for simplicity in main results.

\paragraph{Advantage Weighting Analysis}
We ablate the multi-reward weighting strategy by comparing equal weighting (0.5:0.5) against our chosen ratio (0.8:0.2). The 0.8:0.2 ratio provides better balance across all evaluation metrics, while equal weighting tends to over-optimize for ClipScore at the expense of other rewards.


\section{Discussion}
Our work introduces TreeGRPO, a novel RL framework that overcomes the prohibitive computational cost of aligning visual generative models by recasting the denoising process as a tree search, achieving exponential sample efficiency through strategic branching and prefix reuse, and enabling fine-grained credit assignment via reward backpropagation. While this approach establishes a superior Pareto frontier in efficiency and performance, its current limitations include the introduction of new hyperparameters governing the tree structure and an increased memory footprint during training. Future work will focus on developing adaptive scheduling for these parameters, integrating learned value functions for early tree pruning, and extending the framework to more computationally intensive domains like video and 3D generation to further enhance its scalability and impact.
Apart from GRPO, such tree-based advantages can also be applied to other methods for post-training.

\bibliography{iclr2026_conference}
\bibliographystyle{iclr2026_conference}


\end{document}